\def\etal{{\em et al.}\xspace}
\renewenvironment{proof}{\vspace{-0.05in}\noindent{\bf Proof:}}%
        {\hspace*{\fill}$\Box$\par}
\newenvironment{proofof}[1]{\smallskip\noindent{\bf Proof of #1:}}%
        {\hspace*{\fill}$\Box$\par}
        {\hspace*{\fill}$\Box$\par}
\newtheorem{theorem}{Theorem}[section]
\newtheorem{lemma}[theorem]{Lemma}
\newtheorem{claim}[theorem]{Claim}
\theoremstyle{definition}
\newtheorem{definition}[theorem]{Definition}
\newcommand{\eps}{\varepsilon}
\newcommand{\E}{\mathbb{E}}
\newcommand{\R}{{\mathbb{R}}}
\newcommand{\supp}{\text{supp}}
\newcommand{\Fm}{F_{\max}}
\def\set#1{\left\{#1\right\}}
\newcommand{\argmin}{{\operatorname{argmin}}}
\def\final{} 
    \newcommand{\hnote}[1]{}
    \newcommand{\lnote}[1]{}
    \newcommand{\anote}[1]{}
\newcommand{\hnote}[1]{\begingroup\color{blue!50!black}\em (Huy: #1)\endgroup}
\newcommand{\anote}[1]{\begingroup\color{green!50!black}\em (Alina: #1)\endgroup}
\newcommand{\lnote}[1]{\begingroup\color{purple!50!black}\em (Laci: #1)\endgroup}
\title{Decomposable Submodular Function Minimization:\\ Discrete and Continuous}
\author{
Alina Ene\thanks{Department of Computer Science, Boston University, {\tt aene@bu.edu}}
\and
Huy L. Nguy\~{\^{e}}n\thanks{College of Computer and Information Science, Northeastern University, {\tt hu.nguyen@northeastern.edu}}
\and
L\'{a}szl\'{o} A. V\'{e}gh\thanks{Department of Mathematics, London School of Economics, {\tt L.Vegh@lse.ac.uk}}
}
\begin{document} 
\maketitle

\begin{abstract} 
This paper investigates connections between discrete and continuous
approaches for decomposable submodular function minimization. We
provide improved running time estimates for the state-of-the-art
continuous algorithms for the problem using combinatorial
arguments. We also provide a systematic experimental comparison of the two
types of methods, based on a clear distinction between level-0 and
level-1 algorithms.
\end{abstract} 

\section{Introduction}
Submodular functions arise in a wide range of applications: graph theory, optimization, economics, game theory, to name a few. 
A function $f: 2^V\to \mathbb{R}$ on a ground set $V$ is {\em submodular} if
$f(X)+f(Y)\geq f(X\cap Y)+f(X\cup Y)$ for all sets $X,Y\subseteq
V$. Submodularity can also be interpreted as a decreasing marginals property.

There has been significant interest in submodular optimization in the machine learning and computer vision communities. The {\em submodular function minimization} (SFM) problem arises
in problems in image segmentation or MAP inference tasks in Markov Random Fields. Landmark results in combinatorial optimization give polynomial-time exact algorithms for SFM. However, the high-degree polynomial dependence in the running time is prohibitive for large-scale problem instances. The main objective in this context is to develop fast and scalable SFM algorithms.

Instead of minimizing arbitrary submodular functions, several recent papers aim to exploit special structural properties of submodular functions arising in practical applications. A popular model is {\em decomposable submodular functions}: these can be written as sums of several ``simple'' submodular functions defined on small supports.

Some definitions are in order. Let $f:2^V\to\mathbb{R}$ be a submodular function, and let $n:=|V|$. We can assume w.l.o.g. that $f(\emptyset)=0$. We are interested in solving the {\em submodular function minimization problem}
\begin{equation}\label{prob:sfm}
\min_{S\subseteq V} f(S).\tag{SFM}
\end{equation}
The base polytope of a submodular function is defined as
\begin{align*}
  B(f) := \{ x \in \mathbb{R}^V \colon x(S) \leq f(S) \; \forall S \subseteq V, x(V) = f(V) \}.
\end{align*}
One can optimize linear functions over $B(f)$ using the greedy algorithm.
The problem \eqref{prob:sfm} can be reduced to finding the minimum norm point of the base polytope $B(f)$ \cite{fujishige80}.
\begin{equation}\label{prob:fuji}
\min\left\{ \frac12 \|y\|_2^2\colon y \in B(f)\right\}.\tag{Min-Norm}
\end{equation}
This reduction is the starting point of convex optimization approaches for \eqref{prob:sfm}.
We refer the reader to Sections 44--45 in \cite{Schrijver03} for concepts and results in submodular optimization, and  to \cite{Bach-monograph} on machine learning applications.

We assume that $f$ is given in the decomposition 
\[f(S) = \sum_{i = 1}^r f_i(S), \]
where each $f_i: 2^{V} \rightarrow \mathbb{R}$ is a submodular function. Such functions are called \emph{decomposable} or
\emph{Sum-of-Submodular (SoS)} in the literature. In this paper, we will use the abbreviation DSFM.

For each $i\in [r]$, the function $f_i$ has an effective support $C_i$ such that  $f_i(S)=f_i(S\cap C_i)$ for every $S\subseteq V$. For each $i\in [r]$, we assume that two oracles are provided: {\em (i)} a value oracle that returns $f_i(S)$ for any set $S\subseteq V$ in time $\mathrm{EO}_i$; and {\em (ii)} a quadratic minimization oracle ${\cal O}_i(w)$.
For any input vector $w \in \mathbb{R}^n$, this oracle returns an optimal solution to \eqref{prob:fuji} for the function $f_i+w$, or equivalently, an optimal solution to  $\min_{y \in B(f_i)} \|y + w\|^2_2$. We let $\Theta_i$ denote the running time of a single call to the oracle ${\cal O}_i$,  $\Theta_{\max}:=\max_{i\in[r]}\Theta_i$ denote the maximum time of an oracle call, $\Theta_{\mathrm{avg}} := {1 \over r} \sum_{i \in [r]} \Theta_i$ denote the average time of an oracle call.\footnote{For flow-type algorithms for DSFM, a slightly weaker oracle assumption suffices, returning a minimizer of $\min_{S \subseteq C_i} f_i(S) + w(S)$ for any given $w \in \mathbb{R}^{C_i}$. This oracle and the quadratic minimization oracle are reducible to each other:  the former reduces to a single call to the latter, and one can implement the latter using $O(|C_i|)$ calls to the former (see e.g. \cite{Bach-monograph}).} We let $F_{i,\max} := \max_{S \subseteq V} |f_i(S)|$, $\Fm:= \max_{S \subseteq V} |f(S)|$ denote the maximum function values.

Decomposable SFM thus requires algorithms on two levels. The \emph{level-0} algorithms
are the subroutines used to evaluate the oracles  ${\cal O}_i$ for every $i\in [r]$. The \emph{level-1} algorithm minimizes the function $f$ using the level-0 algorithms as black boxes.

\subsection{Prior work}
SFM has had a long history in combinatorial optimization since the early 1970s, following the influential work of Edmonds \cite{Edmonds70}. The first polynomial-time algorithm was obtained via the ellipsoid method \cite{Grotschel1981}; recent work presented substantial improvements using this approach \cite{LeeSW15}.
Substantial work focused on designing strongly polynomial combinatorial algorithms \cite{Schrijver00,iwata2001,fleischer03,Iwata03,Orlin09,IwataO09}. 
Still, designing practical algorithms for SFM that can be applied to large-scale problem instances remains an open problem.

Let us now turn to decomposable SFM.
Previous work mainly focused on level-1 algorithms. These can be classified as \emph{discrete} and \emph{continuous} optimization methods. The discrete approach builds on techniques of classical discrete algorithms for  network flows and for submodular flows. Kolmogorov \cite{kolmogorov12} showed that the problem can be reduced to submodular flow maximization, and also presented a more efficient augmenting path algorithm. Subsequent discrete approaches were given in \cite{Arora12,Fix13,Fix14}.
Continuous approaches start with convex programming formulation \eqref{prob:fuji}. Gradient methods were 
applied for the decomposable setting in \cite{StobbeK10,NishiharaJJ14,EneN15}. 

Less attention has been given to the level-0 algorithms. Some papers mainly focus on theoretical guarantees on the running time of level-1 algorithms, and treat the level-0 subroutines as black-boxes (e.g. \cite{kolmogorov12,NishiharaJJ14,EneN15}). In other papers (e.g. \cite{StobbeK10,JegelkaBS13}), the model is restricted to functions $f_i$ of a simple specific type that are easy to minimize. An alternative assumption is that all $C_i$'s are small, of size at most $k$; and thus these oracles can be evaluated by exhaustive search, in $2^k$ value oracle calls (e.g. \cite{Arora12,Fix13}). 

Shanu \etal \cite{Shanu16} use a block coordinate descent method for level-1, and allow arbitrary functions $f_i$. The oracles are evaluated via the Fujishige-Wolfe minimum norm point algorithm \cite{Fujishige11, Wolfe76} for level-0. 

\subsection{Our contributions}
Our paper establishes connections between discrete and continuous methods for decomposable SFM, as well as provides a systematic experimental comparison of these approaches.
Our main theoretical contribution improves the worst-case complexity bound of the most recent continuous optimization methods \cite{NishiharaJJ14,EneN15} by a factor of $r$, the number of functions in the decomposition. This is achieved by improving the bounds on the relevant condition numbers. Our proof exploits ideas from the discrete optimization approach. This provides not only better, but also considerably simpler arguments than the algebraic proof in \cite{NishiharaJJ14}. 

The guiding principle of our experimental work is the clean conceptual distinction between the level-0 and level-1 algorithms. Previous experimental studies considered the level-0 and level-1 algorithms as a single ``package''. For example, Shanu \etal \cite{Shanu16} compare the performance of their \emph{SoS Min-Norm algorithm} to the continuous approach of Jegelka \etal \cite{JegelkaBS13} and the combinatorial approach of Arora \etal \cite{Arora12}. However, these implementations are difficult to compare since they  use three different level-0 algorithms: Fujishige-Wolfe in SoS Min-Norm, a general QP solver for the algorithm of 
\cite{JegelkaBS13}, and exhaustive search for \cite{Arora12}. For potentials of large support, Fujishige-Wolfe outperforms these other level-0 subroutines, hence  the algorithms in \cite{JegelkaBS13,Arora12} could have compared more favorably using the same Fujishige-Wolfe subroutine.

In our experimental setup, we compare level-1 algorithms by using the same level-0 subroutines. We compare the state-of-the-art continuous and discrete  algorithms: RCDM and ACDM from \cite{EneN15} with Submodular IBFS from \cite{Fix13}. We consider multiple options for the level-0 subroutines. For certain potential types, we use tailored subroutines exploiting the specific form of the problem. We also consider a variant of the Fujishige-Wolfe algorithm as a subroutine applicable for arbitrary potentials. 
Our experimental results reveal the following tradeoff. Discrete algorithms on level-1 require more calls to the level-0 oracle, but less overhead computation. Hence using algorithms such as IBFS on level-1 can be significantly faster than gradient descent as long as the potentials have fairly small supports. However, as the size of the potentials grow, or we do need to work with a generic level-0 algorithm, the better choice is using gradient methods. Gradient methods can perform better for larger potentials also due to weaker requirements on the level-0 subroutines: approximate level-0 subroutines suffice for them, whereas discrete algorithms require exact optimal solutions on level-0. 

{\bf Paper outline.} The rest of the paper is structured as follows.
Section~\ref{sec:flow} describes the level-1 algorithmic framework for DSFM that is based on network flows, and outlines the IBFS algorithm. Section~\ref{sec:conv-opt} describes the level-1 algorithmic framework for DSFM that is based on convex optimization, and outlines the gradient descent algorithms. Section~\ref{sec:kappa} gives improved convergence guarantees for the gradient descent algorithms outlined in Section~\ref{sec:conv-opt}. Section~\ref{sec:level-0} discusses the different types of level-0 algorithms and how they can be used together with the level-1 frameworks. Section~\ref{sec:experiments} presents our experimental results.

\section{Level-1 algorithms based on network flow}
\label{sec:flow}

In this section, we outline a level-1 algorithmic framework for DSFM that is based on a combinatorial framework first studied in \cite{FujishigeZhang92}.\footnote{The framework was introduced in a slightly different context, for the submodular intersection problem. The dual of this problem is minimizing a submodular function of the form $f=f_1+f_2$, with access to oracles minimizing $f_1$ and $f_2$.} 

For a decomposable function $f$, every $x\in B(f)$ can be written as $x = \sum_{i = 1}^r x_i$, where
$\supp(x_i)\subseteq C_i$ and $x_i\in B(f_i)$ (see e.g. Theorem 44.6 in \cite{Schrijver03}). A natural algorithmic approach is to maintain an $x\in B(f)$ in such a representation, and iteratively update it using the combinatorial framework described below. 
DSFM can be casted as a maximum network flow instance in a network that is suitably defined based on the current point $x$. This  can be viewed as an analogue of the residual graph in the maxflow/mincut setting, and it is precisely the residual graph if the DSFM instance was a mincut instance.

{\bf The auxiliary graph.} For an $x\in B(f)$ of the form $x=\sum_{i = 1}^r x_i$, we construct the following directed auxiliary graph $G = (V, E)$, with $E = \bigcup_{i = 1}^r E_i$ and capacities $c:E\to \R_+$. The arc sets $E_i$ are complete directed graphs (cliques) on $C_i$, and for an arc $(u,v)\in E_i$, we define $c(u,v):=\min\{ f_i(S)-x_i(S)\colon S\subseteq C_i, u\in S, v\notin S\}$. This is the maximum value $\varepsilon$ such that $x'_i\in B(f_i)$, where $x'_i(u)=x_i(u)+\varepsilon$, $x'_i(v)=x_i(v)-\varepsilon$, $x'_i(z)=x_i(z)$ for $z\notin\{u,v\}$. 

Let 
 $N := \{ v \in V \colon x(v) < 0 \}$ and $P := \{ v \in V \colon x(v) > 0\}$. 
The algorithm aims to improve the current $x$ by updating along shortest directed paths from $N$ to $P$ with positive capacity; there are several ways to update the solution, and we discuss specific approaches later in the section. If there exists no such directed path, then we let $S$ denote the set reachable from $N$ on directed paths
with positive capacity; thus, $S\cap P=\emptyset$. It is easy to show that $S$ is a minimizer of the function $f$. 

Updating along a shortest path ${\cal Q}$ from $N$ to $P$ amounts to the following. Let $\varepsilon$ denote the minimum capacity of an arc on $\cal Q$. If $(u,v)\in {\cal Q}\cap E_i$, then we increase $x_i(u)$ by $\varepsilon$ and decrease $x_i(v)$ by $\varepsilon$. The crucial technical claim \cite{FujishigeZhang92} is the following. Let $d(u)$ denote the shortest path distance of positive capacity arcs from $u$ to the set $P$. Then, an update along a shortest directed path from $N$ to $P$ results in a feasible $x\in B(f)$, and further, all distance labels $d(u)$ are non-decreasing.

{\bf Level-1 algorithms based on the network flow approach.} Using this auxiliary graph, and updating on shortest augmenting paths, one can generalize several maximum flow algorithms to a level-1 algorithm of DSFM.
These algorithms include: the Edmonds-Karp-Dinitz maximum flow algorithm, the preflow-push algorithm \cite{GoldbergTarjan88}, the incremental breadth first search algorithm (IBFS) \cite{goldberg11}, and the excesses incremental breadth first search algorithm \cite{goldberg15}. Our experiments will use an implementation of IBFS, following~\cite{Fix13}. 

{\bf Submodular incremental breadth first search (IBFS).} Fix \etal
\cite{Fix13} adapt the IBFS algorithm to the above
described submodular framework  using the above
mentioned claims by Fujishige \& Zhang \cite{FujishigeZhang92}. IBFS is an augmenting path algorithm
for the maximum flow problem. It identifies a shortest path from
the source set $N$ to the sink set $P$ via 
growing shortest path trees simultaneously forwards from $N$ and
backwards from $P$. 

The submodular IBFS algorithm provides us with a level-1 algorithm for DSFM. Each step of the algorithm involves determining the capacity of an arc in the auxiliary graph; as we explain in Section~\ref{sec:level-0}, each of these capacities can be computed using a single call to a level-0 subroutine ${\cal O}_i$.

By combining the level-1 IBFS algorithm with appropriate level-0 subroutines, we obtain an algorithm for DSFM whose running time can be upper bounded as follows. On a directed graph with $n$ nodes and $m$ arcs, IBFS runs in time $O(n^2m)$. In the DSFM setting, we have $m=O(\sum_{i\in [r]} |C_i|^2 )$. Every step  involves determining an auxiliary capacity, which can be implemented using a single call to a level-0 subroutine ${\cal O}_i$ (see Section~\ref{sec:level-0}); the maximum time of such an oracle call is $\Theta_{\max}$. Hence, the running time bound for submodular IBFS can be given as $O(n^2\Theta_{\max} \sum_{i\in [r]}|C_i|^2)$. If all $C_i$'s are small, $O(1)$, then this gives $O(n^2r\Theta_{\max})$.

\section{Level-1 algorithms based on convex optimization}
\label{sec:conv-opt}
In this section, we outline the level-1 algorithms for DSFM that are based on gradient descent. 
Recall the convex quadratic program (\ref{prob:fuji}) from the Introduction. This program has a unique optimal solution $s^*$; the set $S=\{v \in V \colon s^*(v) < 0\}$ is the unique smallest minimizer to \eqref{prob:sfm}. We will refer to this optimal solution $s^*$ throughout the section. 

In the DSFM setting, one can write (\ref{prob:fuji}) in multiple equivalent forms \cite{JegelkaBS13}. For the first formulation, we let $\mathcal{P} := \prod_{i = 1}^r B(f_i)\subseteq \R^{rn},$ and let $A\in \R^{n\times (rn)}$ denote the following matrix:
	$$A := \underbrace{[I_n I_n \dots I_n]}_\text{$r$ times}.$$
Note that, for every $y \in \mathcal{P}$, $Ay = \sum_{i = 1}^r y_i$, where $y_i$ is the $i$-th block of $y$, and thus $Ay \in B(f)$. The problem \eqref{prob:fuji} can be reformulated for DSFM as follow.
\begin{equation}
\label{Prox-DSFM}
	\min\left\{ {1 \over 2} \left\|Ay \right\|^2_2 \colon \tag{Prox-DSFM} y\in \mathcal{P}\right\}.
\end{equation}
The second formulation is the following. Let us define the subspace $\mathcal{A} := \{ a \in \mathbb{R}^{nr} \colon Aa= 0\}$, and minimize its distance from $\mathcal{P}$:
\begin{equation}
\label{Best-Approx}
	\min\left\{ \|a - y\|^2_2 \tag{Best-Approx} \colon a \in \mathcal{A}, y \in \mathcal{P}\right\}.
\end{equation}
The set of optimal solutions for both formulations (\ref{Prox-DSFM}) and (\ref{Best-Approx}) is the set 
$\mathcal{E} := \{y \in \mathcal{P} \colon Ay = s^*\}$,
 where $s^*$ is the optimum of (\ref{prob:fuji}). We note that, even though the set of solutions to (\ref{Best-Approx}) are pairs of points $(a, y) \in \mathcal{A} \times \mathcal{P}$, the optimal solutions are uniquely determined by $y \in \mathcal{P}$, since the corresponding $a$ is the projection of $y$ to   $\mathcal{A}$.

\begin{lemma}[{\cite{JegelkaBS13}, Lemma 2}]
\label{lem:decomp-opt-set}
 The set $\mathcal{E}$ is non-empty and it coincides with the set of optimal solutions of (\ref{Prox-DSFM}) and (\ref{Best-Approx}).
\end{lemma}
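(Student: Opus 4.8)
The plan is to route everything through the single identity $\{Ay : y \in \mathcal{P}\} = B(f)$, i.e. that the Minkowski sum $\sum_{i=1}^r B(f_i)$ equals $B(f)$; once this is in hand, all three assertions follow quickly. As a preliminary I would record that \eqref{prob:fuji} has a \emph{unique} optimum $s^*$, since it minimizes the strictly convex function $\tfrac12\|\cdot\|_2^2$ over the compact convex set $B(f)$.

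For the image identity, the containment $\{Ay : y \in \mathcal{P}\} \subseteq B(f)$ is the easy direction: for $y \in \mathcal{P}$ and any $S \subseteq V$ we have $(Ay)(S) = \sum_i y_i(S) \le \sum_i f_i(S) = f(S)$, with equality at $S = V$, so $Ay \in B(f)$. The reverse containment is exactly the decomposition result quoted at the start of Section~\ref{sec:flow} (Theorem 44.6 in \cite{Schrijver03}): every $x \in B(f)$ can be written as $x = \sum_i x_i$ with $x_i \in B(f_i)$ and $\supp(x_i) \subseteq C_i$, i.e. $x = Ay$ for $y = (x_1, \dots, x_r) \in \mathcal{P}$. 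Applying this to $x = s^*$ immediately produces a $y \in \mathcal{P}$ with $Ay = s^*$, so $\mathcal{E} \neq \emptyset$.

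Next, for \eqref{Prox-DSFM}: since $\{Ay : y \in \mathcal{P}\} = B(f)$, minimizing $\tfrac12\|Ay\|_2^2$ over $\mathcal{P}$ has the same optimal value as minimizing $\tfrac12\|z\|_2^2$ over $z \in B(f)$, namely \eqref{prob:fuji}. A point $y \in \mathcal{P}$ is therefore optimal for \eqref{Prox-DSFM} if and only if $Ay$ attains the minimum-norm value, and by uniqueness of $s^*$ this happens exactly when $Ay = s^*$, i.e. when $y \in \mathcal{E}$; this gives the second assertion. Finally, for \eqref{Best-Approx} I would reduce the inner minimization to a projection. For fixed $y$, $\min_{a \in \mathcal{A}} \|a - y\|_2^2$ is the squared distance from $y$ to the subspace $\mathcal{A} = \ker A$, equal to $\|Py\|_2^2$ for $P$ the orthogonal projection onto $\mathcal{A}^\perp = \operatorname{im}(A^T)$. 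Because $A = [I_n \dots I_n]$ satisfies $AA^T = rI_n$, one has $P = A^T(AA^T)^{-1}A = \tfrac1r A^T A$, so $\min_{a \in \mathcal{A}} \|a - y\|_2^2 = \tfrac1r\|Ay\|_2^2$. Hence \eqref{Best-Approx} equals $\min_{y \in \mathcal{P}} \tfrac1r\|Ay\|_2^2$, a positive scalar multiple of the \eqref{Prox-DSFM} objective; its set of optimal $y$ is thus again $\mathcal{E}$, and the optimal $a$ is forced to be $Py$, the projection of $y$ onto $\mathcal{A}$, as claimed.

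The main obstacle is the image identity $\{Ay : y \in \mathcal{P}\} = B(f)$, and specifically its nontrivial ``$\supseteq$'' direction; I expect to discharge this by directly invoking the decomposition theorem already cited rather than reproving it. Beyond that, the only care needed is the projection computation $AA^T = rI_n$ and the observation that a strictly monotone rescaling of the objective preserves the set of minimizers.
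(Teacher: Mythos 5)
The paper does not actually prove this lemma: it imports it verbatim from \cite{JegelkaBS13} (their Lemma~2), so there is no in-paper argument to compare yours against. Your proof is correct and essentially self-contained modulo the one external fact you invoke, namely the decomposition $B(f)=\sum_{i}B(f_i)$ (Theorem~44.6 in \cite{Schrijver03}), which the paper itself already quotes at the start of Section~\ref{sec:flow}; given that, your three steps are sound: non-emptiness of $\mathcal{E}$ by decomposing $s^*$, the claim for (\ref{Prox-DSFM}) from the image identity $A\mathcal{P}=B(f)$ together with uniqueness of the minimum-norm point of (\ref{prob:fuji}), and the claim for (\ref{Best-Approx}) from the computation $AA^{T}=rI_n$, which shows the partially minimized objective is $\tfrac{1}{r}\|Ay\|_2^2$, a positive multiple of the (\ref{Prox-DSFM}) objective. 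One trivial slip to fix: with $P$ defined as the orthogonal projection onto $\mathcal{A}^{\perp}=\operatorname{im}(A^{T})$, the optimal $a$ for a given $y$ is $(I-P)y$, not $Py$; your verbal description (``the projection of $y$ onto $\mathcal{A}$'') is the correct one and matches the paper's remark immediately after the definition of (\ref{Best-Approx}).
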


{\bf Gradient methods.} The gradient descent algorithms of \cite{NishiharaJJ14,EneN15} provide level-1 algorithms for DSFM. In the following, we provide a brief overview of these algorithms and we refer the reader to the respective papers for more details. 

{\bf The alternating projections algorithm.} Nishihara \etal minimize (\ref{Best-Approx}) using \emph{alternating projections} \cite{NishiharaJJ14}. The algorithm starts with a point $a_0 \in \mathcal{A}$ and it iteratively constructs a sequence $\set{(a^{(k)}, x^{(k)})}_{k \geq 0}$ by projecting onto $\mathcal{A}$ and $\mathcal{P}$: $x^{(k)} = \argmin_{x \in \mathcal{P}}\|a^{(k)} - x\|_2$, $a^{(k + 1)} = \argmin_{a \in \mathcal{A}}\|a - x^{(k)}\|_2$.

{\bf Random coordinate descent algorithms.} Ene and Nguyen minimize (\ref{Prox-DSFM}) using \emph{random coordinate descent} \cite{EneN15}. The RCDM algorithm adapts the random coordinate descent algorithm of Nesterov \cite{Nesterov10} to (\ref{Prox-DSFM}). In each iteration, the algorithm samples a block $i \in [r]$ uniformly at random and it updates $x_i$ via a standard gradient descent step for smooth functions. ACDM, the accelerated version of the algorithm, presents a further enhancement using techniques from  \cite{FR13}.

{\bf Rate of convergence.} The algorithms mentioned above enjoy a \emph{linear convergence rate} despite the fact that the objective functions of (\ref{Best-Approx}) and (\ref{Prox-DSFM}) are not strongly convex. Instead, these works show that there are certain parameters that one can associate with the objective functions such that the convergence is at the rate $(1 - \alpha)^k$, where $\alpha \in (0, 1)$ is a quantity that depends on the appropriate parameter. Let us now precisely define these parameters and state the convergence guarantees as a function of these parameters.

 Let $\mathcal{A}'$ be the affine subspace 
$\mathcal{A}':= \{ a \in \mathbb{R}^{nr} \colon Aa = s^*\}.$ 
Note that $\mathcal{E} = \mathcal{P} \cap \mathcal{A}'$.
For $y\in \R^{nr}$ and a closed set $K\subseteq \R^{nr}$, we let $d(y,K)=\min\set{\|y-z\|_2 \colon z\in K}$ denote the distance between $y$ and $K$.
 The relevant parameter for the Alternating Projections algorithm is defined as follows.
\begin{definition}[\cite{NishiharaJJ14}]
For every $y\in (\mathcal{P} \cup \mathcal{A}') \setminus {\cal E}$, let
	\begin{align*}
		\kappa(y) &\coloneqq \frac{d(y, {\cal E})}{\max\set{d(y, \mathcal{P}), d(y, \mathcal{A}')}},\quad \mbox{ and }\\
		\kappa_* &\coloneqq \sup\left\{ \kappa(y) \colon y \in (\mathcal{P} \cup \mathcal{A}') \setminus {\cal E} \right\}.
	\end{align*}
\end{definition}

The relevant parameter for the random coordinate descent algorithms is the following.

\begin{definition}[\cite{EneN15}]
	For every $y\in\mathcal{P}$, let $y^* \coloneqq \argmin_{p} \{\|p-y\|_2 \colon Ap=s^*\}$ be the optimal solution to (\ref{Prox-DSFM}) that is closest to $y$. We say that the objective function ${1 \over 2} \|Ay\|^2_2$ of (\ref{Prox-DSFM}) is \emph{restricted $\ell$-strongly convex} if, for all $y \in \mathcal{P}$, we have
	\begin{align*}
		&\|A(y - y^*)\|^2_2 \geq \ell \|y - y^*\|^2_2,\quad \mbox{ and }\\
		\ell_* \coloneqq \sup&\left\{\ell \colon {1 \over 2} \|Ay\|^2_2 \text{ is restricted $\ell$-strongly convex}\right\}.
	\end{align*}
\end{definition}

The running time dependence of the algorithms on these parameters is given in the following theorems.
\begin{theorem}[\cite{NishiharaJJ14}]
	Let $(a^{(0)}, x^{(0)} =  \argmin_{x \in \mathcal{P}}\|a^{(0)} - x\|_2)$ be the initial solution and let  $(a^*, x^*)$ be an optimal solution to (\ref{Best-Approx}). The alternating projection algorithm produces in
		\[k = \Theta\left(\kappa_*^2  \ln\left( {\|x^{(0)} - x^*\|_2 \over \epsilon} \right) \right) \]
	iterations a pair of points $a^{(k)} \in \mathcal{A}$ and $x^{(k)} \in \mathcal{P}$ that is $\epsilon$-optimal, i.e.,
		\[\|a^{(k)} - x^{(k)}\|_2^2 \leq \|a^* - x^*\|_2^2 + \eps. \]
\end{theorem}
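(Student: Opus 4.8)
The plan is to track a single scalar potential, the squared distance $d(x^{(k)}, \mathcal{E})^2$ of the current $\mathcal{P}$-iterate to the optimal set, show that it contracts by a fixed multiplicative factor $1 - \Omega(\kappa_*^{-2})$ per iteration, and then convert this geometric decay into both the claimed iteration count and the additive objective guarantee. Throughout I fix one optimal pair $(a^*, x^*)$ with $x^* \in \mathcal{E} \subseteq \mathcal{P}$ and $a^* = \argmin_{a \in \mathcal{A}} \|a - x^*\|_2 \in \mathcal{A}$, and record the two structural facts that drive the analysis: the optimal displacement $v^* := x^* - a^*$ is orthogonal to the subspace $\mathcal{A}$ (it is a projection residual), and it satisfies $Av^* = Ax^* - Aa^* = s^* - 0 = s^*$. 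Consequently $\mathcal{A}' = \mathcal{A} + v^*$ is exactly the translate of $\mathcal{A}$ by a vector orthogonal to $\mathcal{A}$, so $P_{\mathcal{A}'}(x) = P_{\mathcal{A}}(x) + v^*$ for every $x$; this is what lets me pass between the subspace the algorithm actually projects onto ($\mathcal{A}$) and the affine space appearing in the definition of $\kappa_*$ ($\mathcal{A}'$).

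First I would establish monotonicity and a telescoping estimate from the firm nonexpansiveness of Euclidean projections. Applying $\|P_C(y) - c\|_2^2 + \|y - P_C(y)\|_2^2 \le \|y - c\|_2^2$ to the projection onto $\mathcal{P}$ (with the feasible point $x^* \in \mathcal{P}$) and to the projection onto $\mathcal{A}$ (with the feasible point $a^* \in \mathcal{A}$) gives
\[ \|x^{(k)} - x^*\|_2^2 + \|a^{(k)} - x^{(k)}\|_2^2 \le \|a^{(k)} - x^*\|_2^2, \qquad \|a^{(k+1)} - a^*\|_2^2 + \|x^{(k)} - a^{(k+1)}\|_2^2 \le \|x^{(k)} - a^*\|_2^2 . \]
These already show the gaps $\|a^{(k)} - x^{(k)}\|_2$ are non-increasing; summed over $k$ they show the per-step projection residuals are square-summable, so the iterates converge and the gap tends to the optimal value $\|a^* - x^*\|_2$.

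The crux is upgrading this to a linear rate, and here the single hypothesis I would use is the linear-regularity (error-bound) interpretation of $\kappa_*$: specializing the definition to $y = x^{(k)} \in \mathcal{P}$, where $d(y, \mathcal{P}) = 0$, yields $d(x^{(k)}, \mathcal{E}) \le \kappa_* \, d(x^{(k)}, \mathcal{A}')$, i.e. an iterate that is nearly feasible for $\mathcal{A}'$ is also nearly optimal. Using $P_{\mathcal{A}'} = P_{\mathcal{A}} + v^*$, the quantity $d(x^{(k)}, \mathcal{A}') = \|x^{(k)} - a^{(k+1)} - v^*\|_2$ is computed from the same projection the algorithm performs, so this inequality lower-bounds the progress a round of projections can make. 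Combining the error bound with the firm-nonexpansiveness decrements above — decomposing each displacement into its component along $v^*$, which carries the irreducible optimal gap and makes no progress, and its component orthogonal to $v^*$, which is controlled by $\kappa_*$ — I expect to obtain the one-step contraction $d(x^{(k+1)}, \mathcal{E})^2 \le (1 - c\,\kappa_*^{-2})\, d(x^{(k)}, \mathcal{E})^2$ for an absolute constant $c > 0$.

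I expect this contraction step to be the main obstacle. The gap $\|a^{(k)} - x^{(k)}\|_2$ does not tend to zero but to the nonzero optimal distance $\|a^* - x^*\|_2 = \|v^*\|_2$, so the raw projection decrements cannot by themselves force geometric convergence; progress must be measured relative to $\mathcal{E}$, and the affine offset $v^*$ between $\mathcal{A}$ and $\mathcal{A}'$ must be tracked with care so that $\kappa_*$, which is a statement about $\mathcal{A}'$, can be applied to the projections the algorithm performs onto $\mathcal{A}$. Once the contraction is established, the rest is routine: iterating gives $d(x^{(k)}, \mathcal{E})^2 \le (1 - c\,\kappa_*^{-2})^{k}\, d(x^{(0)}, \mathcal{E})^2$; an elementary expansion of $\|a^{(k)} - x^{(k)}\|_2^2$ using $v^* \perp \mathcal{A}$ together with the monotonicity of the gaps bounds the objective surplus $\|a^{(k)} - x^{(k)}\|_2^2 - \|a^* - x^*\|_2^2$ by a constant multiple of $d(x^{(k-1)}, \mathcal{E})$ plus a lower-order quadratic term; and taking logarithms, together with $d(x^{(0)}, \mathcal{E}) \le \|x^{(0)} - x^*\|_2$, yields that $k = \Theta\!\left(\kappa_*^2 \ln(\|x^{(0)} - x^*\|_2 / \eps)\right)$ iterations suffice for $\|a^{(k)} - x^{(k)}\|_2^2 \le \|a^* - x^*\|_2^2 + \eps$.
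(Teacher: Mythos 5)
First, note that the paper does not prove this statement: it is quoted verbatim from \cite{NishiharaJJ14} as a black-box convergence guarantee, so there is no in-paper proof to compare against and your attempt has to be judged against the argument in that reference. Your skeleton is the right one, and the supporting observations all check out: $v^*=x^*-a^*$ is indeed orthogonal to $\mathcal{A}$ with $Av^*=s^*$, hence $\mathcal{A}'=\mathcal{A}+v^*$ and $P_{\mathcal{A}'}=P_{\mathcal{A}}+v^*$; specializing $\kappa_*$ to $y\in\mathcal{P}$ does give the error bound $d(y,\mathcal{E})\le \kappa_*\,d(y,\mathcal{A}')$; the firm-nonexpansiveness inequalities are correct; and the final conversion of a geometric decay of $d(x^{(k)},\mathcal{E})$ into the additive objective guarantee goes through essentially as you describe (via $\|a^{(k)}-x^{(k)}\|_2\le\|v^*\|_2+d(x^{(k-1)},\mathcal{E})$).

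The genuine gap is that the one-step contraction $d(x^{(k+1)},\mathcal{E})^2\le(1-c\kappa_*^{-2})\,d(x^{(k)},\mathcal{E})^2$ --- which is the entire mathematical content of the theorem --- is asserted rather than derived, and the mechanism you sketch for it does not obviously close. Here is where it stalls. Let $z=P_{\mathcal{E}}(x^{(k)})$. Firm nonexpansiveness of $P_{\mathcal{A}'}$ at $z\in\mathcal{A}'$ gives $\|a^{(k+1)}+v^*-z\|_2^2\le d(x^{(k)},\mathcal{E})^2-d(x^{(k)},\mathcal{A}')^2\le(1-\kappa_*^{-2})\,d(x^{(k)},\mathcal{E})^2$, which is exactly the contraction you want --- but for the \emph{shifted} point $a^{(k+1)}+v^*$. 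The algorithm then computes $x^{(k+1)}=P_{\mathcal{P}}(a^{(k+1)})$, not $P_{\mathcal{P}}(a^{(k+1)}+v^*)$, and the only bound nonexpansiveness gives directly is $d(x^{(k+1)},\mathcal{E})\le\|x^{(k+1)}-z\|_2\le\|a^{(k+1)}-z\|_2$, which retains the irreducible offset $\|v^*\|_2$ and therefore does not contract. Your proposed fix --- ``decomposing each displacement into its component along $v^*$ and its component orthogonal to $v^*$'' --- cannot be applied componentwise through $P_{\mathcal{P}}$, because $P_{\mathcal{P}}$ is nonlinear; one genuinely needs an additional geometric fact relating $P_{\mathcal{P}}(a^{(k+1)})$ to $P_{\mathcal{P}}(a^{(k+1)}+v^*)$ (in \cite{NishiharaJJ14} this is handled by separate lemmas exploiting that $v^*$ is the common displacement vector realizing $d(\mathcal{P},\mathcal{A})$, so that translating by $v^*$ commutes with the relevant projections near the optimal face). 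You correctly flag this as ``the main obstacle,'' but flagging it is not the same as overcoming it, so as written the proposal is a plan whose hardest step is missing. A secondary, minor point: the theorem as stated is a $\Theta(\cdot)$ bound, and your argument only addresses the upper-bound direction.
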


\begin{theorem}[\cite{EneN15}]
	Let $x^{(0)} \in \mathcal{P}$ be the initial solution and let $x^*$ be an optimal solution to (\ref{Prox-DSFM}) that minimizes $\|x^{(0)} - x^*\|_2$. The random coordinate descent algorithm produces in
	\[ k =  \Theta\left(  {r \over \ell_*} \ln\left( {\|x^{(0)}  - x^* \|_2 \over \epsilon} \right)\right)\]
	iterations a solution $x^{(k)}$ that is $\epsilon$-optimal in expectation, i.e., $\E\left[{1 \over 2} \|A x^{(k)} \|^2_2 \right] \leq {1 \over 2} \|Ax^*\|^2_2 + \epsilon$.
	
	The accelerated coordinate descent algorithm produces in 
	\[k = \Theta \left (r \sqrt{{1 \over \ell_*}} \ln\left( { \|x^{(0)} - x^* \|_2 \over \epsilon} \right) \right)  \]
	iterations (specifically, $\Theta\left( \ln\left( { \|x^{(0)} - x^*\|_2 \over \epsilon} \right) \right)$ epochs with $\Theta\left(r \sqrt{{1 \over \ell_*}}\right)$ iterations in each epoch) a solution $x^{(k)}$ that is $\epsilon$-optimal in expectation, i.e., $\E\left[{1 \over 2} \|A x^{(k)} \|^2_2 \right] \leq {1 \over 2} \|Ax^*\|^2_2 + \epsilon$.
\end{theorem}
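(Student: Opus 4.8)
The plan is to establish the stated linear convergence rates by the standard randomized block-coordinate descent analysis, instantiated for the quadratic objective $F(y) = \tfrac12\|Ay\|_2^2$, where the only two problem-specific inputs are the block smoothness constants and a proximal Polyak--{\L}ojasiewicz inequality derived from restricted strong convexity.

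First I would record the structure of $F$. Since $A = [I_n \cdots I_n]$, we have $\nabla F(y) = A^\top A y$, the partial gradient in block $i$ equals $\nabla_i F(y) = \sum_{j} y_j = Ay$, and the $(i,i)$ block of the Hessian $A^\top A$ is exactly $I_n$. Hence $F$ is block-coordinatewise smooth with block Lipschitz constant $L_i = 1$ for every $i$, and one RCDM step -- sampling $i$ uniformly and setting $y_i$ to the minimizer over $B(f_i)$ of $\phi_i(z) := \langle \nabla_i F(y), z - y_i\rangle + \tfrac12\|z - y_i\|_2^2$ -- is precisely a call to the oracle $\mathcal{O}_i$. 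The descent lemma for an $L_i$-smooth block together with the optimality of this projected step gives, for the iterate $y^+$ obtained by updating block $i$, the one-step guarantee $F(y) - F(y^+) \ge -\phi_i(y_i^+)\ge 0$. Averaging over the uniform choice of $i$ and writing $\mathcal{D}(y) := -2\sum_{i} \min_{z \in B(f_i)} \phi_i(z) \ge 0$ for the separable proximal-gradient progress quantity yields $\E[F(y^{(k+1)}) \mid y^{(k)}] \le F(y^{(k)}) - \tfrac{1}{2r}\mathcal{D}(y^{(k)})$.

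The crux is a \emph{proximal-PL inequality} asserting $\tfrac12 \mathcal{D}(y) \ge \ell_*\,(F(y) - F^*)$ for all $y \in \mathcal{P}$, where $F^* = \tfrac12\|s^*\|_2^2$. To prove it, let $y^*$ be the closest optimal point to $y$ and bound $\min_{z \in \mathcal P}\sum_i \phi_i(z_i)$ from above by evaluating the separable model along the feasible segment $z_\lambda = y + \lambda(y^* - y) \in \mathcal{P}$, $\lambda \in [0,1]$. Because $F$ is quadratic, $\langle \nabla F(y), y^* - y\rangle = -(F(y) - F^*) - \tfrac12\|A(y^* - y)\|_2^2$, and restricted $\ell_*$-strong convexity gives $\|A(y^* - y)\|_2^2 \ge \ell_* \|y^* - y\|_2^2$; substituting and optimizing over $\lambda$ (taking $\lambda = \min\{1, \ell_*\}$) makes the quadratic-in-$\lambda$ term nonpositive and produces $\min_{z\in\mathcal P} \sum_i \phi_i(z_i) \le -\ell_*(F(y)-F^*)$, i.e.\ the claimed inequality. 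Combining with the one-step bound gives the contraction $\E[F(y^{(k+1)}) - F^* \mid y^{(k)}] \le (1 - \ell_*/r)(F(y^{(k)}) - F^*)$; taking total expectations and telescoping, then bounding the initial gap $F(x^{(0)}) - F^*$ by a polynomial in $\|x^{(0)} - x^*\|_2$ via smoothness (the polynomial enters only inside the logarithm), yields the RCDM bound $k = \Theta((r/\ell_*)\ln(\|x^{(0)} - x^*\|_2/\eps))$.

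For the accelerated bound I would plug the same two ingredients -- uniform block constants $L_i = 1$ and the strong-convexity-type parameter $\ell_*$ -- into the accelerated randomized coordinate descent framework of Fercoq--Richt\'arik \cite{FR13} (equivalently Nesterov's estimate-sequence acceleration), whose momentum analysis improves the dependence on the condition number from $\ell_*^{-1}$ to $\ell_*^{-1/2}$ while keeping the factor $r$, giving $k = \Theta(r\sqrt{1/\ell_*}\,\ln(\|x^{(0)} - x^*\|_2/\eps))$; only the two problem-specific inputs change relative to the generic statement, so no new estimates are needed. I expect the main obstacle to be the proximal-PL step: the objective is genuinely not strongly convex ($A$ has a large kernel), and $y^*$ is the moving projection of the current iterate onto the optimal set $\mathcal{E}$ rather than a fixed minimizer, so one must use that $\nabla F$ is constant ($= A^\top s^*$) on all of $\mathcal{E}$ and handle the constrained first-order optimality carefully; the restricted strong convexity definition is exactly what closes this gap.
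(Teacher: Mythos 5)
This theorem is quoted from \cite{EneN15} as a black-box prerequisite; the paper contains no proof of it, so there is no in-paper argument to compare yours against. Judged on its own, your reconstruction of the RCDM half is the standard and correct route: block Lipschitz constants $L_i=1$ (indeed the per-block descent lemma holds with equality here, since $\|A(y^+-y)\|_2=\|y_i^+-y_i\|_2$ for a single-block update), expected one-step decrease $\tfrac{1}{2r}\mathcal{D}(y)$, and the proximal-PL inequality obtained by evaluating the separable model along the segment toward $y^*$ with $\lambda=\min\{1,\ell_*\}$, which is exactly where restricted strong convexity enters. Two points deserve care. First, your segment argument needs $z_\lambda\in\mathcal{P}$, hence $y^*$ must be the projection of $y$ onto $\mathcal{E}=\mathcal{P}\cap\mathcal{A}'$; the displayed formula in the paper's definition literally projects onto $\mathcal{A}'$ only, so you should resolve that discrepancy explicitly. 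Second, the ACDM half is the one place you are thinner than the argument requires: the accelerated coordinate method with $L_i=1$ gives only a sublinear guarantee of the form $\E[F(y^{(k)})]-F^*=O\bigl(r^2/k^2\bigr)\,\|y^{(0)}-y^*\|_2^2$, and the stated linear rate comes from combining this with the quadratic-growth bound $\|y^{(0)}-y^*\|_2^2\le (2/\ell_*)\bigl(F(y^{(0)})-F^*\bigr)$ (which follows from restricted strong convexity together with first-order optimality of $y^*$ over $\mathcal{P}$) and restarting every $\Theta\bigl(r\sqrt{1/\ell_*}\bigr)$ iterations to halve the gap --- precisely the epoch structure the theorem advertises. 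That restart step should be written out rather than delegated wholesale to \cite{FR13}, whose generic theorems assume genuine strong convexity.
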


Nishihara \etal show that $\kappa_* \leq nr$, and a family of instances (in fact, minimum cut instances) is given for which $\kappa_* \geq \Omega(n \sqrt{r})$. Ene and Nguyen show that $\ell_* \geq {r /\kappa^2_*}$. In Theorem~\ref{thm:kappa-bound}, we close the remaining gap and show that $\kappa_* = \Theta(n \sqrt{r})$ and $\ell_* = \Theta(1 / n^2)$, and thus we obtain tight analyses for the running times of the above mentioned algorithms.

By combining the level-1 gradient descent algorithms with appropriate level-0 subroutines, we obtain algorithms for DSFM whose running times can be upper bounded as follows. Using our improved convergence guarantees, it follows that RCDM obtains in time $O\left(n^2 r\Theta_{\mathrm{avg}}\ln\left( { \|x^{(0)} - x^* \|_2 \over \epsilon}\right)\right)$ a solution that is $\varepsilon$-approximate in expectation. For ACDM, the improved time bound is $O\left(nr \Theta_{\mathrm{avg}} \ln\left( { \|x^{(0)} - x^* \|_2 \over \epsilon} \right) \right)$. We can upper bound the diameter of the base polytope by $O(\sqrt{n} F_{\max})$ \cite{Jegelka11}. For integer-valued functions, a $\varepsilon$-approximate solution can be converted to an exact optimum if $\varepsilon=O(1/n)$ \cite{Bach-monograph}.

\section{Tight convergence bounds for the continuous algorithms}
\label{sec:kappa}

In this section, we show that the combinatorial approach introduced in Section~\ref{sec:flow} can be applied to obtain better bounds on the parameters $\kappa_*$ and $\ell_*$ defined in Section~\ref{sec:conv-opt}. Besides giving a stronger bound, our proof is considerably simpler than the algebraic one using Cheeger's inequality in \cite{NishiharaJJ14}.
The key is the following lemma.
\begin{lemma}
\label{lem:decompose}
	Let $y \in \mathcal{P}$ and $s^* \in B(f)$. Then there exists a point $x \in \mathcal{P}$ such that $Ax = s^*$ and $\|x - y\|_2 \leq \frac{\sqrt{n}}{2} \|Ay - s^*\|_1$.
\end{lemma}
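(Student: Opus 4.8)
The plan is to read the statement as a quantitative (Lipschitz) refinement of the block-decomposition of $B(f)$: the existence of \emph{some} $x \in \mathcal{P}$ with $Ax = s^*$ is immediate from Theorem~44.6 in \cite{Schrijver03} (since $s^* \in B(f) = B(\sum_i f_i)$), so the entire content lies in producing one such $x$ that is close to $y$. First I would record that the discrepancy $\delta := s^* - Ay$ is \emph{balanced}, i.e. $\delta(V) = 0$, because both $s^*$ and $Ay$ lie in $B(f)$ and therefore have total value $f(V)$. Consequently the positive part of $\delta$ (the coordinates that must be increased) and the negative part (those that must be decreased) carry equal mass $\tfrac12\|\delta\|_1$, which is the total amount of ``flow'' that needs to be routed.

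Next I would construct $x$ from $y$ by correcting $\delta$ inside the auxiliary graph $G$ of Section~\ref{sec:flow}. Let $N_\delta := \{v : \delta(v) < 0\}$ and $P_\delta := \{v : \delta(v) > 0\}$, and seek a flow in $G$ shipping $|\delta(v)|$ out of each $v \in N_\delta$ and into each $v \in P_\delta$. One unit of flow on an arc $(u,v) \in E_i$ is realized by the elementary exchange that decreases $x_i(u)$ and increases $x_i(v)$; by the Fujishige--Zhang capacity property quoted in Section~\ref{sec:flow}, such exchanges keep each $x_i \in B(f_i)$, hence $x \in \mathcal{P}$, while the whole routed flow changes $Ax$ by exactly $\delta$, so that $Ax = s^*$. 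Feasibility of such a flow is where I would spend care: it follows from the existence of \emph{some} block-decomposition $s^* = \sum_i s_i^*$, which via max-flow/min-cut on $G$ certifies that every cut separating $N_\delta$ from $P_\delta$ has sufficient capacity. I would then decompose this flow into simple directed paths $Q_1,\dots,Q_t$ from $N_\delta$ to $P_\delta$ carrying values $\varepsilon_1,\dots,\varepsilon_t$ with $\sum_j \varepsilon_j = \tfrac12\|\delta\|_1$.

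Finally I would bound $\|x - y\|_2$. Writing the change blockwise, $x_i - y_i$ is the net flow carried by the arcs of $E_i$, so $\|x-y\|_2^2 = \sum_v \sum_i (x_i(v) - y_i(v))^2$. The clean way to extract the factor $\sqrt n$ is to charge the change at a vertex to the flow passing through it: for each $v$ let $w_v$ denote the amount of path-flow through $v$; since all flow originates in $N_\delta$ with total value $\tfrac12\|\delta\|_1$, we have $w_v \le \tfrac12\|\delta\|_1$ for every $v$, and at most $n$ vertices carry any flow. Bounding the per-vertex contribution $\sum_i (x_i(v)-y_i(v))^2$ by $w_v^2$ and applying Cauchy--Schwarz over the $\le n$ active vertices would give $\|x-y\|_2 \le \sqrt{n}\,\max_v w_v \le \tfrac{\sqrt n}{2}\|\delta\|_1$. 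I expect the main obstacle to be exactly this last accounting: a naive triangle inequality over paths, or bounding $\sum_i|x_i(v)-y_i(v)|$ by the combined in/out flow at $v$, loses a factor of $\sqrt 2$. To reach the stated constant I would need to route the flow so as to minimize block-switching (flow passing through a vertex \emph{within a single} $E_i$ produces no net change there), thereby forcing the tighter per-vertex estimate $\sum_i (x_i(v)-y_i(v))^2 \le w_v^2$ rather than $2w_v^2$; making this routing choice and its bookkeeping rigorous is the delicate step, whereas the existence of the flow and its membership in $\mathcal{P}$ are routine given the Fujishige--Zhang machinery.
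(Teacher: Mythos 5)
Your high-level strategy---drive $Ay$ to $s^*$ by routing the imbalance $\delta := s^* - Ay$ through the auxiliary graph via elementary exchanges, then bound the total movement by (total flow) $\times\sqrt{n}$---is exactly the strategy of the paper's proof, and your observations that $\delta(V)=0$ and that the total flow value is $\tfrac12\|\delta\|_1$ are both used there. But there are two genuine gaps in the execution.

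First, the feasibility of a \emph{single static flow}. The auxiliary graph is not a capacitated network in the usual sense: $c(u,v)$ for $(u,v)\in E_i$ is the largest \emph{single} exchange keeping $x_i\in B(f_i)$, and these capacities interact---pushing flow simultaneously on several arcs of $E_i$, each within its own capacity, can leave $B(f_i)$. So max-flow/min-cut on the static graph neither certifies nor realizes the routing you describe, and the existence of some decomposition $s^*=\sum_i s_i^*$ does not translate into cut capacities of the auxiliary graph at the point $y$. The paper proceeds \emph{iteratively}: it augments along one shortest positive-capacity path at a time, recomputes all capacities, and re-establishes at every step that an augmenting path exists as long as $Ax\neq s^*$, via a tight-set argument (if the set $S$ reachable from the deficit nodes misses all excess nodes, then $S$ is $i$-tight for every $i$, hence $x(S)=f(S)$, contradicting $x(S)<s^*(S)\le f(S)$). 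Feasibility of each single augmentation and termination then come from the Fujishige--Zhang analysis \cite{FujishigeZhang92}. Your proposal needs this iterative structure, or a genuinely new argument, to go through.

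Second, the norm accounting. You correctly flag that the per-vertex estimate $\sum_i(x_i(v)-y_i(v))^2\le w_v^2$ fails when flow switches blocks at $v$, and you propose to repair it by routing so as to avoid block-switching; you do not show such a routing exists, and in general you cannot force it, since which arcs have positive capacity at $v$ in the incoming and outgoing blocks is dictated by the tight sets of the respective $f_i$'s. The paper sidesteps this entirely: over all augmentations each \emph{coordinate} of $x$ moves by at most $\sum_j\varepsilon_j=\tfrac12\|\delta\|_1$ (so $\|x-y\|_\infty\le\tfrac12\|\delta\|_1$), the total $\ell_1$ movement is at most $n$ times that, and the conclusion follows from $\|p\|_2\le\sqrt{\|p\|_1\|p\|_\infty}$. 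Replacing your unproven routing claim with this $\ell_1$--$\ell_\infty$ interpolation is the cleanest way to close the second gap.
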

Before proving this lemma, we show how it can be used to derive the bounds.
\begin{theorem}\label{thm:kappa-bound}
	We have $\kappa_* \leq n \sqrt{r}/2 + 1$ and $\ell_* \geq {4 / n^2}$.
\end{theorem}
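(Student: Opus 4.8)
The plan is to reduce both bounds to a single geometric consequence of Lemma~\ref{lem:decompose}: a bound on the distance from any $y \in \mathcal{P}$ to the solution set $\mathcal{E}$ in terms of $\|Ay - s^*\|_2$. The first step is a direct computation of the distance to the affine subspace $\mathcal{A}'$. Since $A = [I_n \dots I_n]$ satisfies $AA^T = rI_n$, the orthogonal projection of any $y$ onto $\mathcal{A}' = \{a : Aa = s^*\}$ is explicit, and one obtains
\[ d(y, \mathcal{A}') = \frac{1}{\sqrt{r}}\,\|Ay - s^*\|_2. \]
Combined with the elementary inequality $\|v\|_1 \le \sqrt{n}\,\|v\|_2$ for $v \in \R^n$, Lemma~\ref{lem:decompose} applied with this $s^*$ yields, for every $y \in \mathcal{P}$, a point $x \in \mathcal{P}$ with $Ax = s^*$ (hence $x \in \mathcal{E}$) and
\[ d(y, \mathcal{E}) \le \|x - y\|_2 \le \frac{\sqrt{n}}{2}\,\|Ay - s^*\|_1 \le \frac{n}{2}\,\|Ay - s^*\|_2. \]
This single estimate drives everything that follows.

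To bound $\kappa_*$, I would split the supremum into the two cases forced by the $\max$ in the denominator. If $y \in \mathcal{P}$, then $d(y, \mathcal{P}) = 0$, so the denominator equals $d(y, \mathcal{A}')$, and combining the two displays gives $d(y,\mathcal{E}) \le \tfrac{n}{2}\|Ay-s^*\|_2 = \tfrac{n\sqrt r}{2}\,d(y,\mathcal{A}')$, whence $\kappa(y) \le n\sqrt{r}/2$. If instead $y \in \mathcal{A}'$, the denominator equals $d(y,\mathcal{P})$; let $z$ be the projection of $y$ onto $\mathcal{P}$, so $\|z-y\|_2 = d(y,\mathcal{P})$. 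Applying Lemma~\ref{lem:decompose} to $z \in \mathcal{P}$ produces $x \in \mathcal{E}$ with $\|x - z\|_2 \le \tfrac{\sqrt n}{2}\|Az - s^*\|_1$, and since $Ay = s^*$ we have $Az - s^* = A(z-y)$. The block structure of $A$ gives $\|A(z-y)\|_1 \le \|z-y\|_1 \le \sqrt{nr}\,\|z-y\|_2$, hence $\|x-z\|_2 \le \tfrac{n\sqrt r}{2}\,d(y,\mathcal{P})$. The triangle inequality $d(y,\mathcal{E}) \le \|y-z\|_2 + \|z-x\|_2$ then yields $\kappa(y) \le 1 + n\sqrt{r}/2$, matching the claimed bound.

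For $\ell_*$, interpreting $y^*$ as the nearest optimal solution (so $y^* \in \mathcal{E}$, $Ay^* = s^*$, and $\|y-y^*\|_2 = d(y,\mathcal{E})$), the first case above already suffices: since $A(y - y^*) = Ay - s^*$, the inequality $d(y,\mathcal{E}) \le \tfrac{n}{2}\|Ay - s^*\|_2$ rearranges to $\|A(y-y^*)\|_2^2 \ge \tfrac{4}{n^2}\,\|y-y^*\|_2^2$ for every $y \in \mathcal{P}$, which is exactly restricted $(4/n^2)$-strong convexity. I expect the main obstacle to be the case $y \in \mathcal{A}'$ in the $\kappa_*$ argument: Lemma~\ref{lem:decompose} only speaks about points of $\mathcal{P}$, so a point $y \notin \mathcal{P}$ must first be pushed to its projection $z \in \mathcal{P}$, and the resulting perturbation $\|Az - s^*\|_1$ controlled by $d(y,\mathcal{P})$ through the $\ell_1/\ell_2$ comparisons above — this detour is precisely what produces the additive $+1$ in the bound. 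The remaining steps are routine norm manipulations.
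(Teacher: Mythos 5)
Your proposal is correct and follows essentially the same route as the paper: the same case split on $y\in\mathcal{P}$ versus $y\in\mathcal{A}'$, the same explicit computation of $d(y,\mathcal{A}')=\|Ay-s^*\|_2/\sqrt{r}$, the same application of Lemma~\ref{lem:decompose} to the projection of $y$ onto $\mathcal{P}$ (your $z$ is the paper's blockwise-constructed $q$) with the $\ell_1/\ell_2$ comparison producing the additive $+1$, and the same one-line derivation of the $\ell_*$ bound from the Case I estimate.
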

\begin{proof}
	We start with the bound on $\kappa_*$. In order to bound $\kappa_*$, we need to upper bound $\kappa(y)$ for any $y\in (\mathcal{P}\cup\mathcal{A}')\setminus \mathcal{E}$. We distinguish between two cases: $y \in \mathcal{P} \setminus\mathcal{E}$ and $y \in \mathcal{A}' \setminus\mathcal{E}$.

	\noindent{\bf Case I: $y \in \mathcal{P}\setminus\mathcal{E}$.} The denominator in the definition of $\kappa(y)$ is equal to $d(y, \mathcal{A}') = {\|Ay - s^*\|_2}/{\sqrt{r}}$. This follows since the closest point $a=(a_1,\ldots,a_r)$ to $y$ in $\mathcal{A}'$ is to set $a_i=y_i+(s^*-Ay)/r$ for each $i\in [r]$. 
Lemma~\ref{lem:decompose} gives an $x \in \mathcal{P}$ such that $Ax = s^*$ and $\|x - y\|_2 \leq \frac{\sqrt{n}}{2}  \|Ay - s^*\|_1\le \frac{n}2 \|Ay - s^*\|_2$. Since $Ax = s^*$, we have $x \in {\cal E}$ and thus the numerator of $\kappa(y)$ is at most $\|x - y\|_2$. Thus $\kappa(y) \leq {\|x - y\|_2 /(\|Ay - s^*\|_2/\sqrt{r})} \leq n\sqrt{r}/2$.

	\noindent{\bf Case II: $y \in \mathcal{A}'\setminus\mathcal{E}$.}
This means that $Ay = s^*$. The denominator of $\kappa(y)$ is equal to $d(y, \mathcal{P})$. For each $i \in [r]$, let $q_i \in B(f_i)$ be the point that minimizes $\|y_i - q_i\|_2$. Let $q = (q_1, \dots, q_r)\in {\cal P}$. Then $d(y, \mathcal{P}) = \|y - q\|_2$. Lemma~\ref{lem:decompose} with $q$ in place of $y$ gives a point $x\in {\cal E}$ such that $\|q - x\|_2 \leq \frac{\sqrt{n}}2 \|Aq - s^*\|_1$. We have
	 $\|Aq - s^*\|_1 = \|Aq - Ay\|_1 \leq \sum_{i = 1}^r \|q_i - y_i\|_1 = \|q - y\|_1 \leq {\sqrt{nr}} \|q - y\|_2.$
	Thus $\|q - x\|_2 \leq \frac{{n \sqrt{r}} }2\|q - y\|_2$. 
	Since $x \in {\cal E}$, we have 
	$d(y, {\cal E}) \leq \|x - y\|_2 \leq \|x - q\|_2 + \|q - y\|_2 \leq \left(1 + \frac{n \sqrt{r}}2\right) \|q - y\|_2 = \left(1 + \frac{n \sqrt{r}}2\right) d(y, \mathcal{P}).$
	Therefore $\kappa(p) \leq 1 + \frac{n \sqrt{r}}2$, as desired.

Let us now prove the bound on $\ell_*$. Let $y \in \mathcal{P}$ and let  $y^* \coloneqq \argmin_{p} \{\|p-y\|_2\colon Ap=s^*\}$. We need to verify that $\|A(y - y^*)\|^2_2 \geq {4 \over n^2} \|y - y^*\|^2_2$.
Again, we apply Lemma~\ref{lem:decompose} to obtain a point $x\in \mathcal{P}$ such that $Ax = s^*$ and  
$\|x - y\|_2^2 \leq \frac{n}4\|Ax-Ay\|_1^2 \le \frac{n^2}4 \|Ax - Ay\|_2^2$.
Since $Ax= s^*$, the definition of $y^*$ gives $\|y - y^*\|_2^2 \leq \|x - y \|_2^2$. Using that $Ax = Ay^* = s^*$, we have $\|Ax - Ay\|_2 =\|Ay-Ay^*\|_2$. The same calculation as in Case II above implies the required $\|y - y^*\|_2^2 \leq \frac{n^2}4 \|A(y - y^*)\|_2^2$.
\end{proof}

\begin{proofof}{Lemma~\ref{lem:decompose}}
We give an algorithm that transforms $y$ to a vector $x\in {\cal P}$ as in the statement through a sequence of path augmentations in the auxiliary graph defined in Section~\ref{sec:flow}.
We initialize $x=y$ and maintain $x\in {\cal P}$ (and thus $Ax\in B(f)$) throughout. We now define the set of source and sink nodes as 
$N := \{ v \in V \colon (Ax)(v) < s^*(v) \}$ and $P := \{ v \in V \colon (Ax)(v) > s^*(v)\}$. Once $N=P=\emptyset$, we have $Ax=s^*$ and terminate. Note that since $Ax,s^*\in B(f)$, we have $\sum_v (Ax)(v)=\sum_v s^*(v)=f(V)$, and therefore $N=\emptyset$ is equivalent to $P=\emptyset$.
The blocks of $x$ are denoted as $x=(x_1,x_2,\ldots,x_r)$, with $x_i\in B(f_i)$.
\begin{claim} If $N\neq\emptyset$, then there exists a directed path of positive capacity in the auxiliary graph between the sets $N$ and $P$.
\end{claim}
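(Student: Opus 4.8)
The plan is to show that if $N\neq\emptyset$, then some node of $N$ can reach a node of $P$ along positive-capacity arcs in the auxiliary graph. I would argue by contradiction: suppose no such path exists, and let $S$ be the set of all nodes reachable from $N$ via directed arcs of strictly positive capacity (with $N\subseteq S$). By assumption $S\cap P=\emptyset$. The goal is to derive a contradiction with the fact that $s^*\in B(f)$, which will force $N=\emptyset$.

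First I would exploit the structure of the capacities to understand what it means for $S$ to have no positive-capacity arcs leaving it. For each block $i\in[r]$, consider the arcs of $E_i$ that leave $S$, i.e. arcs $(u,v)$ with $u\in S\cap C_i$ and $v\in C_i\setminus S$. Each such arc has capacity zero, which by definition means there is a set $T^{i}_{uv}\subseteq C_i$ with $u\in T^{i}_{uv}$, $v\notin T^{i}_{uv}$, and $f_i(T^{i}_{uv})=x_i(T^{i}_{uv})$ (the constraint is tight). The key structural step is to combine these tight sets, using submodularity of $f_i$, into a single tight set $S_i\subseteq C_i$ satisfying $S\cap C_i\subseteq S_i$ and $(C_i\setminus S)\cap C_i = C_i\setminus S_i$ appropriately — concretely, I would take $S_i$ to be the maximal tight set containing $S\cap C_i$ and disjoint from $C_i\setminus S$. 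Submodularity guarantees that tight sets are closed under union and intersection (the family $\{T\colon x_i(T)=f_i(T)\}$ forms a lattice), so such an $S_i$ exists and is tight: $x_i(S_i)=f_i(S_i)$.

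With these tight sets in hand, I would sum the inequalities $x_i(S_i)=f_i(S_i)$ over all $i$ and compare against $s^*\in B(f)$. Since $S_i\cap C_i$ agrees with $S$ on the part of $C_i$ inside $S$ and excludes the part outside $S$, summing $x_i$ over the $S_i$ should recover $x(S)=\sum_i x_i(S_i) = \sum_i f_i(S_i)\geq \sum_i f_i(S\cap C_i)$, and by submodularity/monotonicity of the effective supports this yields $x(S)\geq f(S)$. On the other hand, $s^*\in B(f)$ gives $s^*(S)\leq f(S)$, so $s^*(S)\leq x(S)=(Ax)(S)$. But every $v\in S$ satisfies $(Ax)(v)\leq s^*(v)$ (since $S\cap P=\emptyset$ means no node of $S$ is in $P$), and every $v\in N\subseteq S$ satisfies the strict inequality $(Ax)(v)<s^*(v)$; as $N\neq\emptyset$, summing over $S$ gives $(Ax)(S)<s^*(S)$, contradicting $s^*(S)\leq (Ax)(S)$.

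The main obstacle is the middle step: correctly assembling the individual tight sets $S_i$ and verifying that $x(S)\geq f(S)$, i.e. that $S$ is ``tight'' for the aggregate function $f$. This requires care because $S$ lives on $V$ while each $S_i$ lives on $C_i$, and the bookkeeping between $x(S)=\sum_i x_i(S\cap C_i)$ and $\sum_i f_i(S_i)$ must be handled so that the tightness of each block aggregates cleanly. Everything else is a direct application of submodularity (to close tight sets under the lattice operations) and the two base-polytope membership facts $Ax\in B(f)$ and $s^*\in B(f)$; the sign of the excess $(Ax)(S)-s^*(S)$, strictly negative because $N\neq\emptyset$ contributes strict inequalities, is what closes the argument.
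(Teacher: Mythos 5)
Your proposal is correct and follows essentially the same route as the paper: contradiction via the reachable set $S$, the lattice structure of tight sets under submodularity to conclude that $S$ is tight for every $f_i$ (hence $x(S)=f(S)$), and the sign argument $x(S)<s^*(S)\le f(S)$ from $N\subseteq S$, $S\cap P=\emptyset$. The only cosmetic difference is that the paper assembles the tight sets via the minimal $i$-tight set $T_i(u)$ containing each $u\in S$ (noting $c(u,v)>0$ iff $v\in T_i(u)$), whereas you build $S\cap C_i$ from the zero-capacity witness sets by intersections and unions; these are the same argument.
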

\begin{proof}
We say that a set $T$ is $i$-tight, if $x_i(T)=f_i(T)$. It is a simple consequence of submodularity that the intersection and union of two $i$-tight sets are also $i$-tight sets.
For every $i\in [r]$ and every $u\in V$, we define $T_i(u)$ as the unique minimal $i$-tight set containing $u$. It is easy to see that for an arc $(u,v)\in E_i$, 
$c(u,v)>0$ if and only if $v\in T_i(u)$. We note that if $u\notin C_i$, then $x(u)=f_i(\{u\})=0$ and thus $T_i(u)=\{u\}$.

Let $S$ be the set of vertices reachable from $N$ on a directed path of positive capacity in the auxiliary graph. For a contradiction, assume $S\cap P=\emptyset$. 
By the definition of $S$, we must have $T_i(u)\subseteq S$ for every $u\in S$ and every $i\in[r]$. Since the union of $i$-tight sets is also $i$-tight, we see that  $S$ is $i$-tight for every $i\in[r]$, and consequently, $x(S)=f(S)$. On the other hand, since $N \subseteq S$, $S \cap P = \emptyset$, and $N \neq \emptyset$, we have $x(S) < s^*(S)$. Since $s^* \in B(f)$, we have $x(S) < s^*(S) \leq f(S)$, which is a contradiction. We conclude that $S \cap P \neq \emptyset$.
\end{proof}

In every step of the algorithm, we take a shortest directed path ${\cal Q}$ of positive capacity from $N$ to $P$, and update $x$ along this path. That is, 
if $(u,v)\in {\cal Q}\cap E_i$, then we increase $x_i(u)$ by
$\varepsilon$ and decrease $x_i(v)$ by $\varepsilon$, where
$\varepsilon$ is the minimum capacity of an arc on
$\cal Q$. Note that this is the same as running the Edmonds-Karp-Dinitz algorithm in the submodular auxiliary graph. Using the analysis in \cite{FujishigeZhang92}, one can show that this change maintains $x\in {\cal P}$, and that the algorithm terminates in finite (in fact, strongly polynomial) time.

It remains to bound $\|x-y\|_2$. At every path update, the change in $\ell_{\infty}$-norm of $x$ is at most $\varepsilon$, and the change in $\ell_1$-norm is at most $n\varepsilon$, since the length of the path is $\le n$. At the same time, $\sum_{v\in N} (s^*(v)-(Ax)(v))$ decreases by $\varepsilon$.
Thus, $\|x-y\|_\infty\le \|Ay-s^*\|_1 /2$ and $\|x-y\|_1\le n\|Ay-s^*\|_1 /2$. Using the inequality $\|p\|_2\le \sqrt{\|p\|_1\|p\|_{\infty}}$, we obtain $\|x-y\|_2\le \frac{\sqrt{n}}2 \|Ay-s^*\|_1$, completing the proof.
\end{proofof}

\section{The level-0 algorithms}
\label{sec:level-0}

In this section, we briefly discuss the level-0 algorithms and the interface between the level-1 and level-0 algorithms.

{\bf Two-level frameworks via quadratic minimization oracles.}
Recall from the Introduction the assumption on the subroutines ${\cal O}_i(w)$ that finds the minimum norm point in $B(f_i+w)$ for the input vector $w\in\R^n$. 
The continuous methods in Section~\ref{sec:conv-opt} directly use the  subroutines ${\cal O}_i(w)$ for the alternating projection or coordinate descent steps. For the flow-based algorithms in Section~\ref{sec:flow}, the main oracle query is to find the auxiliary graph capacity $c(u,v)$ of an arc $(u,v)\in E_i$ for some $i\in[r]$. This can be easily formulated as minimizing the function $f_i+w$ for an appropriate $w$ with $\supp(w)\subseteq C_i$; the details are given in Lemma~\ref{lem:exchange-cap-oracle}. As explained at the beginning of Section~\ref{sec:conv-opt}, an optimal solution to \eqref{prob:fuji} immediately gives an optimal solution to \eqref{prob:sfm} for the same submodular function. Hence, the auxiliary graph capacity queries can be implemented via the subroutines ${\cal O}_i(w)$. 
Let us also remark that, while the functions $f_i$ are formally defined on the entire ground set $V$, their effective support is $C_i$, and thus it suffices to solve the quadratic minimization problems on the ground set $C_i$.

\begin{lemma}
\label{lem:exchange-cap-oracle}
  The capacity $c(u, v) := \min\{f_i(S) - x_i(S) \colon S \subseteq
  C_i, u \in S, v \notin S\}$ can be computed as the minimum value of
  $\min_{S\subseteq C_i} f_i(S)+w(S)$ for an appropriately chosen vector $w\in\R^n$,
  $\supp(w)\subseteq C_i$.
\end{lemma}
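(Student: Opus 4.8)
The plan is to encode the two constraints $u \in S$ and $v \notin S$ as a modular penalty and to fold that penalty, together with the linear term $-x_i$, into a single vector $w$. First I would observe that the objective $g(S) := f_i(S) - x_i(S)$ is itself submodular, since $f_i$ is submodular and $S \mapsto x_i(S) = \sum_{z \in S} x_i(z)$ is modular; indeed $g = f_i + w_0$ for the modular vector $w_0 := -x_i$, which has $\supp(w_0) \subseteq C_i$ because $\supp(x_i) \subseteq C_i$. Note also that $u, v \in C_i$, since the arc $(u,v)$ lives on the clique $E_i$.

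Next I would turn the membership constraints into modular terms. For a fixed element $z$, the indicator $S \mapsto \mathbf{1}[z \in S]$ is a modular set function, so for a large constant $M$ the penalty $M\big(\mathbf{1}[u \notin S] + \mathbf{1}[v \in S]\big)$ equals the constant $M$ plus the modular function putting weight $-M$ on $u$ and $+M$ on $v$. Adding this penalty to $g$ yields $f_i(S) + w(S) + M$, where $w(u) = -x_i(u) - M$, $w(v) = -x_i(v) + M$, and $w(z) = -x_i(z)$ for $z \in C_i \setminus \{u,v\}$ (with $w(z)=0$ outside $C_i$), so that $\supp(w) \subseteq C_i$ as required.

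The key step is to show that for $M$ chosen large enough, every minimizer $S^*$ of $\min_{S \subseteq C_i} f_i(S) + w(S)$ satisfies $u \in S^*$ and $v \notin S^*$. This is a standard exchange argument: if $u \notin S^*$, then replacing $S^*$ by $S^* \cup \{u\}$ changes the objective by $\big(f_i(S^* \cup \{u\}) - f_i(S^*)\big) - x_i(u) - M$; the marginal is at most $2F_{i,\max}$ in absolute value and, using $x_i \in B(f_i)$, one has $|x_i(u)| \le 2F_{i,\max}$, so this change is strictly negative whenever $M > 4F_{i,\max}$, contradicting optimality. The symmetric computation with $S^* \setminus \{v\}$ forces $v \notin S^*$. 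Consequently the unconstrained minimizer is feasible for the constrained problem defining $c(u,v)$, the penalty vanishes on it, and hence $\min_{S \subseteq C_i} f_i(S) + w(S) = c(u,v) - M$. Thus $c(u,v)$ is obtained by adding the known constant $M$ to the returned optimum (or, equivalently, directly as $f_i(S^*) - x_i(S^*)$ from the minimizer $S^*$), which establishes the reduction to a single call of the form $\min_{S \subseteq C_i} f_i(S) + w(S)$.

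I expect the only real obstacle to be pinning down a concrete, sufficiently large $M$ and verifying the exchange argument cleanly; both pieces reduce to bounding the marginals of $f_i$ and the coordinates of $x_i$, and these follow routinely from $|f_i| \le F_{i,\max}$ and $x_i \in B(f_i)$, so nothing beyond careful bookkeeping is needed.
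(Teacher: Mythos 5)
Your proof is correct and follows essentially the same route as the paper: fold the constraints $u \in S$, $v \notin S$ into the linear term $w$ (keeping $w = -x_i$ on the remaining coordinates of $C_i$) and use an exchange argument to show every minimizer of $f_i + w$ over $C_i$ must contain $u$ and exclude $v$. The only difference is cosmetic: instead of your big-$M$ penalty with $M > 4F_{i,\max}$ (which requires bounding $F_{i,\max}$), the paper sets $w(u) = -(f_i(\{u\})+1)$ and $w(v) = -(f_i(C_i) - f_i(C_i\setminus\{v\}) - 1)$, exploiting that by submodularity these are exactly the maximum marginal of $u$ and the minimum marginal of $v$, which makes the same exchange inequalities go through with explicitly computable constants.
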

\begin{proof}
  We define a weight vector $w \in \mathbb{R}^n$ as follows: $w(u)
  = - (f_i(\{u\}) + 1)$; $w(v) = - (f_i(C_i) - f_i(C_i \setminus
  \{v\}) - 1)$; $w(a) = - x(a)$ for all $a \in C_i \setminus \{u,
  v\}$, and $w(a)=0$ for all $a\notin C_i$.
Let $A\subseteq C_i$ be a minimizer of $\min_{S \subseteq C_i} f_i(S)
+ w(S)$.
It suffices to show that $u \in A$ and $v \notin A$. Note that
$f_i(\{u\}) = f_i(\{u\}) - f(\emptyset)$ is the maximum marginal value
of $u$, i.e., $\max_{S} (f_i(S \cup \{u\}) - f_i(S))$. Moreover,
$f_i(C_i) - f_i(C_i \setminus \{v\})$ is the minimum marginal value of
$v$. To show $u\in A$, let us assume for a contradiction that $u\notin A$.
  \begin{align*}
    f_i(A \cup \{u\}) + w(A \cup \{u\}) &= (f_i(A) + w(A)) + (f_i(A \cup \{u\}) - f_i(A)) + w(u)\\
    &= (f_i(A) + w(A)) + (f_i(A \cup \{u\}) - f_i(A)) - f_i(\{u\}) + 1\\
    &\leq f_i(A) + w(A) - 1.
  \end{align*}
  Similarly, to show that $v\notin A$, suppose for a contradiction that $v \in A$, and consider the set $A \setminus \{v\}$. Since $f_i(C_i) - f_i(C_i \setminus \{v\}) \leq f_i(A) - f_i(A \setminus \{v\})$, we have
  \begin{align*}
    f_i(A \setminus \{v\}) + w(A \setminus \{v\}) &= (f_i(A) + w(A)) - (f_i(A) - f_i(A \setminus \{v\})) - w(v)\\
    &= (f_i(A) + w(A)) - (f_i(A) - f_i(A \setminus \{v\})) + (f_i(C_i) - f_i(C_i \setminus \{v\})) - 1\\
    &\leq f_i(A) + w(A) - 1.
  \end{align*}
  Therefore $u \in A$ and $v \notin A$, and hence $A \in \argmin\{f_i(S) - x_i(S) \colon u \in S, v \notin S\}$. 
\end{proof}

Whereas discrete and continuous algorithms require the same type of oracles, there is an important difference between the two algorithms in terms of exactness for the oracle solutions. The discrete algorithms require exact values of the auxiliary graph capacities $c(u,v)$, as they must maintain $x_i\in B(f_i)$ throughout. Thus, the oracle  must always return an optimal solution. The continuous algorithms are more robust, and return a solution with the required accuracy even if the oracle only returns an approximate solution. As discussed in Section~\ref{sec:experiments}, this difference leads to the continuous methods being applicable in settings where the combinatorial algorithms are prohibitively slow.

{\bf Level-0 algorithms.}
We now discuss specific algorithms for quadratic minimization over the base polytopes of the functions $f_i$. Several functions that arise in applications are ``simple'', meaning that there is a function-specific quadratic minimization subroutine that is very efficient. If a function-specific subroutine is not available, one can use a general-purpose submodular minimization algorithm. The works \cite{Arora12,Fix13} use a {\em brute force search} as the subroutine for each each $f_i$, whose running time is $2^{|C_i|} \mathrm{EO}_i$. However, this is applicable only for small $C_i$'s and is not suitable for our experiments where the maximum clique size is quite large. 
As a general-purpose algorithm, we used the {\em Fujishige-Wolfe minimum norm point algorithm} \cite{Fujishige11, Wolfe76}. This provides an $\varepsilon$-approximate solution in $O(|C_i| F^2_{i,\max}/\varepsilon)$ iterations, with overall running time bound $O((|C_i|^4 + |C_i|^2 \mathrm{EO}_i) F^2_{i, \max} / \varepsilon)$. \cite{Chakrabarty14}. 
The experimental running time of the Fujishige-Wolfe algorithm can be prohibitively large \cite{jegelka11fast}. As we discuss in Section~\ref{sec:experiments}, by warm-starting the algorithm and performing only a small number of iterations, we were able to use the algorithm in conjunction with the gradient descent level-1 algorithms. 

\section{Experiments}
\label{sec:experiments}


\begin{table}
\caption{Instance sizes}
\label{tb:instance-sizes}
\centering
\begin{tabular}{|c|c|c|c|c|}
\hline
{\bf image} & {\bf \# pixels} & {\bf \# edges} & {\bf \# squares}\\
\hline
bee & 273280 & 1089921 & 68160 \\
\hline
octopus & 273280 & 1089921 & 68160\\
\hline
penguin & 154401 & 615200 & 38400\\
\hline
plant & 273280 & 1089921 & 68160\\
\hline
plane & 154401 & 615200 & 38400 \\
\hline
\end{tabular}
\begin{tabular}{|c|c|c|c|}
\hline
{\bf \# regions} & \multicolumn{3}{|c|}{{\bf min, max, and average region size}}\\
\hline
50 & 298 & 299 & 298.02\\
\hline
49 & 7 & 299 & 237.306\\
\hline
50 & 5 & 299 & 279.02\\
\hline
50 & 8 & 298 & 275.22\\
\hline
50 & 10 & 299 & 291.48\\
\hline
\end{tabular}
\end{table}

\begin{table}
\caption{Minimum cut experiments}
\label{tb:mincut}
\centering
\begin{tabular}{|c|c|c|}
\hline
{\bf image} & {\bf \# functions ($r$)} & {\bf IBFS time (sec)}\\
\hline
bee & 1363201 & 1.70942\\
\hline
octopus & 1363201 & 1.09101 \\
\hline
penguin & 769601 & 0.684413 \\
\hline
plant & 1363201 & 1.30977 \\
\hline
plane & 769601 & 0.745521 \\
\hline
\end{tabular}
\begin{tabular}{|c|c|c|c|}
\hline
\multicolumn{4}{|c|}{{\bf UCDM time (sec)}}\\
\hline
{\bf \# iter = $5r$} & {\bf \# iter = $10r$} & {\bf \# iter = $100r$} & {\bf \# iter = $1000r$}\\
\hline
0.951421 & 1.6234 & 13.4594 & 134.719 \\
\hline
0.937317 & 1.6279 & 13.9887 & 137.969 \\
\hline
0.492372 & 0.836147 & 7.1069 & 70.1742\\
\hline
0.943306 & 1.63492 & 13.9559 & 137.865 \\
\hline
0.521685 & 0.850145 & 7.31664 & 71.8874 \\
\hline
\end{tabular}
\begin{tabular}{|c|c|c|c|}
\hline
\multicolumn{4}{|c|}{{\bf ACDM time (sec)}}\\
\hline
{\bf \# iter = $5r$} & {\bf \# iter = $10r$} & {\bf \# iter = $100r$} & {\bf \# iter = $1000r$}\\
\hline
1.3769 & 2.2696 & 18.4351 & 182.069 \\
\hline
1.40884 & 2.33431 & 19.0471 & 188.887 \\
\hline
0.757929 & 1.24094 & 9.99443 & 98.5717 \\
\hline
1.39893 & 2.29446 & 18.6846 & 185.274 \\
\hline
0.766455 & 1.26081 & 10.1244 & 99.0298 \\
\hline
\end{tabular}
\end{table}
\begin{table}
\caption{Small cliques experiments}
\label{tb:smallcliques}
\centering
\begin{tabular}{|c|c|c|}
\hline
{\bf image} & {\bf \# functions ($r$)} & {\bf IBFS time (sec)}\\
\hline
bee & 1431361 & 14.5125 \\
\hline
octopus & 1431361 & 12.9877 \\
\hline
penguin & 808001 & 7.58177 \\
\hline
plant & 1431361 & 13.7403 \\
\hline
plane & 808001 & 7.67518 \\
\hline
\end{tabular}
\begin{tabular}{|c|c|c|c|}
\hline
\multicolumn{4}{|c|}{{\bf RCDM time (sec)}}\\
\hline
{\bf \# iter = $5r$} & {\bf \# iter = $10r$} & {\bf \# iter = $100r$} & {\bf \# iter = $1000r$}\\
\hline
4.14091 & 7.57959 & 66.0576 & 660.496 \\
\hline
4.29358 & 7.80816 & 68.5862 & 675.23 \\
\hline
2.16441 & 4.08777 & 37.8157 & 372.733 \\
\hline
4.6404 & 8.21702 & 69.059 & 672.753 \\
\hline
2.182 & 4.12521 & 37.8602 & 373.825 \\
\hline
\end{tabular}
\begin{tabular}{|c|c|c|c|}
\hline
\multicolumn{4}{|c|}{{\bf ACDM time (sec)}}\\
\hline
{\bf \# iter = $5r$} & {\bf \# iter = $10r$} & {\bf \# iter = $100r$} & {\bf \# iter = $1000r$}\\
\hline
5.24474 & 10.0951 & 98.7737 & 932.954 \\
\hline
5.5891 & 10.7124 & 99.4081 & 924.076 \\
\hline
2.95226 & 5.71215 & 52.9766 & 512.665 \\
\hline
5.8395 & 11.0806 & 102.023 & 900.979 \\
\hline
2.95003 & 5.70771 & 53.7524 & 486.294 \\
\hline
\end{tabular}
\end{table}
\begin{table}
\caption{Large cliques experiments with potential specific quadratic minimization for the region potentials. In order to be able to run IBFS, we used smaller regions: $50$ regions with an average size between $45$ and $50$.}
\label{tb:largecliques}
\centering
\begin{tabular}{|c|c|c|}
\hline
{\bf image} & {\bf \# functions ($r$)} & {\bf IBFS time (sec)}\\
\hline
bee & 1431411 & 14.7271\\
\hline
octopus & 1431411 & 12.698 \\
\hline
penguin & 808051 & 7.51067 \\
\hline
plant & 1431411 & 13.6282 \\
\hline
plane & 808051 & 7.64527 \\
\hline
\end{tabular}
\begin{tabular}{|c|c|c|c|}
\hline
\multicolumn{4}{|c|}{{\bf RCDM time (sec)}}\\
\hline
{\bf \# iter = $5r$} & {\bf \# iter = $10r$} & {\bf \# iter = $100r$} & {\bf \# iter = $1000r$}\\
\hline
 4.29954 & 7.87555 & 67.8876 & 664.816\\
\hline
 4.18879 & 7.61576 & 66.7 & 656.71\\
\hline
 2.132 & 4.01926 & 36.9896 & 364.694\\
\hline
 4.55894 & 8.06429 & 67.72 & 659.685\\
\hline
 2.16248 & 4.0713 & 37.1917 & 366.272\\
\hline
\end{tabular}

\begin{tabular}{|c|c|c|c|}
\hline
\multicolumn{4}{|c|}{{\bf ACDM time (sec)}}\\
\hline
{\bf \# iter = $5r$} & {\bf \# iter = $10r$} & {\bf \# iter = $100r$} & {\bf \# iter = $1000r$}\\
\hline
 5.34726 & 10.3231 & 100.24 & 912.477\\
\hline
 5.44726 & 10.4446 & 96.2384 & 898.579\\
\hline
 2.90223 & 5.60117 & 51.9775 & 500.083\\
\hline
 5.72946 & 10.8512 & 99.6597 & 879.872\\
\hline
 2.89726 & 5.61102 & 52.5439 & 475.967\\
\hline
\end{tabular}
\end{table}


We evaluate the algorithms on energy minimization problems that arise
in image segmentation problems.
 We follow the standard approach and model the image segmentation task of segmenting an object from the background as finding a minimum cost $0/1$ labeling of the pixels. The total labeling cost is the sum of labeling costs corresponding to \emph{cliques}, where a clique is a set of pixels. We refer to the labeling cost functions as clique potentials.

The main focus of our experimental analysis is to compare the running times of the decomposable submodular minimization algorithms. Therefore we have chosen to use the simple hand-tuned potentials that were used in previous work~\cite{Shanu16,Arora12,StobbeK10}: the \emph{edge-based costs} defined by \cite{Arora12} and the \emph{count-based costs} defined by \cite{StobbeK10}. Specifically, we used the following clique potentials in our experiments, all of which are submodular:
\begin{compactitem}
\item {\bf Unary potentials} for each pixel. The unary potentials are derived from Gaussian Mixture Models of color features \cite{RotherKB04}.
\item {\bf Pairwise potentials} for each edge of the $8$-neighbor grid graph. Each graph edge $(i, j)$ between pixels $i$ and $j$ is assigned a weight that is a function of $\exp(- \|v_i - v_j\|^2)$, where $v_i$ is the RGB color vector of pixel $i$. The clique potential for the edge is the cut function of the edge: the cost of a labeling is equal to zero if the two pixels have the same label and it is equal to the weight of the edge otherwise.
\item {\bf Square potentials} for each $2 \times 2$ square of pixels. We view a $2 \times 2$ square as a graph on $4$ nodes connected with $4$ edges (two horizontal and two vertical edges). The cost of a labeling is the square root of the number of edges of the square that have different labels. This is the basic edge-based potential defined by \cite{Arora12}.
\item {\bf Region potentials} for a set of regions of the image. We compute a set of regions of the image using the region growing algorithm suggested by \cite{StobbeK10}. For each region $C_i$, we define a count-based clique potential as in \cite{StobbeK10,Shanu16}: for each set $S \subseteq C_i$ of pixels, $f_i(S) = |S| |C_i \setminus S|$.
\end{compactitem}
We used five image segmentation instances to evaluate the algorithms\footnote{The data is available at \url{http://melodi.ee.washington.edu/~jegelka/cc/index.html} and \url{http://research.microsoft.com/en-us/um/cambridge/projects/visionimagevideoediting/segmentation/grabcut.htm}}. Table~\ref{tb:instance-sizes} provides the sizes of the resulting instances. The experiments were carried out on a single computer with a 3.3 GHz Intel Core i5 processor and 8 GB of memory. The reported times are averaged over 10 trials.

\begin{table}
\caption{Large cliques experiments with Fujishige-Wolfe quadratic minimization algorithm for the region potentials. The Fujishige-Wolfe algorithm was run for $10$ iterations starting from the current gradient descent solution. The region sizes are given in Table~\ref{tb:instance-sizes}.}
\label{tb:largecliquesmnp}
\centering
\begin{tabular}{|c|c|c|c|}
\hline
\multicolumn{4}{|c|}{{\bf RCDM time (sec)}}\\
\hline
{\bf \# iter = $5r$} & {\bf \# iter = $10r$} & {\bf \# iter = $100r$} & {\bf \# iter = $1000r$}\\
\hline
4.4422 & 8.18077 & 69.0444 & 674.526 \\
\hline
4.30835 & 7.86231 & 68.1428 & 665.57\\
\hline
2.2724 & 4.28243 & 38.1329 & 366.549\\
\hline
4.61008 & 8.20094 & 68.8351 & 660.469\\
\hline
2.28484 & 4.30316 & 38.0435 & 366.825\\
\hline
\end{tabular}
\begin{tabular}{|c|c|c|c|}
\hline
\multicolumn{4}{|c|}{{\bf ACDM time (sec)}}\\
\hline
{\bf \# iter = $5r$} & {\bf \# iter = $10r$} & {\bf \# iter = $100r$} & {\bf \# iter = $1000r$}\\
\hline
5.29305 & 10.2853 & 103.452 & 936.613\\
\hline
5.55511 & 10.6411 & 97.955 & 901.875\\
\hline
2.95909 & 5.74585 & 54.3808 & 505.977\\
\hline
5.71402 & 10.8467 & 99.6515 & 873.694\\
\hline
2.9556 & 5.73271 & 54.0599 & 482.496\\
\hline
\end{tabular}
\end{table}

{\bf Number of iterations for the coordinate methods.} We have run the coordinate descent algorithms for $1000 r$ iterations, where $r$ is the number of functions in the decomposition. Our choice is based on the empirical results of Jegelka \etal \cite{JegelkaBS13} that showed that this number of iterations suffices to obtain good results.

{\bf Minimum cut experiments.} We evaluated the algorithms on instances containing only the unary potentials and the pairwise potentials. Table~\ref{tb:mincut} gives the running times in seconds.

{\bf Small cliques experiments.} We evaluated the algorithms on instances containing the unary potentials, the pairwise potentials, and the square potentials. Table~\ref{tb:smallcliques} gives the running times in seconds.

{\bf Large cliques experiments.} We evaluated the algorithms on instances containing all of the potentials: the unary potentials, the pairwise potentials, the square potentials, and the region potentials. For the region potentials, we used a potential-specific level-$0$ algorithm that performs quadratic minimization over the base polytope in time $O(|C_i| \log(|C_i|) + |C_i| \mathrm{EO}_i)$. Additionally, due to the slow running time of IBFS, we used smaller regions: $50$ regions with an average size between $45$ and $50$.

{\bf Large cliques experiments with Fujishige-Wolfe algorithm.} We also ran a version of the large cliques experiments with the Fujishige-Wolfe algorithm as the level-$0$ algorithm for the region potentials. The Fujishige-Wolfe algorithm was significantly slower than the potential-specific quadratic minimization algorithm and in our experiments it was prohibitive to run the Fujishige-Wolfe algorithm to near-convergence. Since the IBFS algorithm requires almost exact quadratic minimization in order to compute exchange capacities, it was prohibitive to run the IBFS algorithm with the Fujishige-Wolfe algorithm. In contrast, the coordinate descent methods can potentially make progress even if the level-$0$ solution is far from being converged.

In order to empirically evaluate this hypothesis, we made a simple but crucial change to the Fujishige-Wolfe algorithm: we \emph{warm-started} the algorithm with the current solution. Recall that the coordinate descent algorithms maintain a solution $x_i \in B(f_i)$ for each function $f_i$ in the decomposition. We warm-started the Fujishige-Wolfe algorithm with the current solution $x_i$, and we ran the algorithm for a small number of iterations. In our experiments, we ran the Fujishige-Wolfe algorithm for $10$ iterations. These changes made the level-$0$ running time considerably smaller, which made it possible to run the level-$1$ coordinate descent algorithms for as many as $1000r$ iterations. At the same time, performing $10$ iterations starting from the current solution seemed enough to provide an improvement over the current solution. Table~\ref{tb:largecliquesmnp} gives the running times.

{\bf Conclusions.} The combinatorial level-$1$ algorithms such as IBFS are exact and can be significantly faster than the gradient descent algorithms provided that the sizes of the cliques are fairly small. For instances with larger cliques, the combinatorial algorithms are no longer suitable if the only choice for the level-$0$ algorithms are generic methods such as the Fujishige-Wolfe algorithm. The experimental results suggest that in such cases, the coordinate descent methods together with a suitably modified Fujishige-Wolfe algorithm provides an approach for obtaining an approximate solution.

\newpage \clearpage
\bibliographystyle{alpha}
\bibliography{dsm}

\end{document}